%%%%%%%%%%%%%%%%%%%%%%%%%%%%%%%%%%%%%%%%%%%%%%%%%%%%%%%%%%%%%%%%%%%%%%%%%%%%%%%%
%2345678901234567890123456789012345678901234567890123456789012345678901234567890
%        1         2         3         4         5         6         7         8

\documentclass[letterpaper, 10 pt, conference]{ieeeconf}  % Comment this line out if you need a4paper

\IEEEoverridecommandlockouts                              % This command is only needed if 
                                                          % you want to use the \thanks command

\overrideIEEEmargins                                      % Needed to meet printer requirements.

%In case you encounter the following error:
%Error 1010 The PDF file may be corrupt (unable to open PDF file) OR
%Error 1000 An error occurred while parsing a contents stream. Unable to analyze the PDF file.
%This is a known problem with pdfLaTeX conversion filter. The file cannot be opened with acrobat reader
%Please use one of the alternatives below to circumvent this error by uncommenting one or the other
%\pdfobjcompresslevel=0
%\pdfminorversion=4

% See the \addtolength command later in the file to balance the column lengths
% on the last page of the document

% The following packages can be found on http:\\www.ctan.org
%\usepackage{graphics} % for pdf, bitmapped graphics files
%\usepackage{epsfig} % for postscript graphics files
%\usepackage{mathptmx} % assumes new font selection scheme installed
%\usepackage{times} % assumes new font selection scheme installed
%\usepackage{amsmath} % assumes amsmath package installed
%\usepackage{amssymb}  % assumes amsmath package installed

\usepackage{amsmath,amsfonts,amssymb}
\usepackage{algorithm}
\usepackage[noend]{algorithmic} %changed from algorithmic
\usepackage{array}
\usepackage[caption=false,font=normalsize,labelfont=sf,textfont=sf]{subfig}
\usepackage{textcomp}
\usepackage{stfloats}
\usepackage{url}
\usepackage{hyperref}
\usepackage{verbatim}
\usepackage{graphicx}
\usepackage{cite}
\usepackage{optidef} %added
% Command definitions

\def\bbr{\mathbb R}

\def\bbs{\mathbb S}
\def\bbp{\mathbb P}
\def\bbe{\mathbb E}

\def\calx{\mathcal X}

\def\calu{\mathcal U}

\def\calo{\mathcal O}

\def\calp{\mathcal P}

%======== Abbreviations ==========================

\newcommand{\PP}{\mathbb{P}}

		%{\mathbb{B}\left(#1;#2\right)}

\newcommand{\norm}[1]{\left\|#1 \right\|}

% RRT and variants

% math operators

% Colors

 %added
 
\usepackage{amsthm} %added
\newtheorem{theorem}{Theorem} %added
\newtheorem{lemma}{Lemma} %added
 %added
\newtheorem{problem}{Problem} %added
\usepackage{float} %added
\usepackage{multicol} %added

\title{\LARGE \bf
Distributionally Robust RRT with Risk Allocation
}

\author{Kajsa~Ekenberg, Venkatraman~Renganathan, and
Björn Olofsson% <-this % stops a space
\thanks{This project has received funding from the European Research Council (ERC) under the European Union’s Horizon 2020 research and innovation program under grant agreement No 834142 (Scalable Control). K. Ekenberg was a Master's Thesis Student of the Department of Automatic Control LTH, Lund University, Sweden. V. Renganathan and B. Olofsson are with the Department of Automatic Control LTH, Lund University, Lund, Sweden. E-mail: ka2560ek-s@student.lu.se, (venkat, bjorn.olofsson)@control.lth.se.}%%
}

\begin{document}

\maketitle
\thispagestyle{empty}
\pagestyle{empty}

%%%%%%%%%%%%%%%%%%%%%%%%%%%%%%%%%%%%%%%%%%%%%%%%%%%%%%%%%%%%%%%%%%%%%%%%%%%%%%%%
\begin{abstract}

An integration of distributionally robust risk allocation into sampling-based motion planning algorithms for robots operating in uncertain environments is proposed. We perform non-uniform risk allocation by decomposing the distributionally robust joint risk constraints defined over the entire planning horizon into individual risk constraints given the total risk budget. Specifically, the deterministic tightening defined using the individual risk constraints is leveraged to define our proposed exact risk allocation procedure. Embedding the risk allocation technique into sampling-based motion planning algorithms realises guaranteed conservative, yet increasingly more risk-feasible trajectories for efficient state-space exploration. 

\end{abstract}

%%%%%%%%%%%%%%%%%%%%%%%%%%%%%%%%%%%%%%%%%%%%%%%%%%%%%%%%%%%%%%%%%%%%%%%%%%%%%%%%
\section{Introduction}
Motion planning under uncertainty becomes challenging when only limited information about the system uncertainty is known. Such lack of information adds complexity to the existing path planning problem formulation for finding guarantees on the safety of the path generated by algorithms that aim to address such uncertainty. For instance, critical robotics operations such as Mars Rover and rescue robot missions cannot afford the risk of obstacle collision given their high stakes and environmental uncertainty. Often assumptions (such as Gaussian uncertainties) are made in the name of tractability as in \cite{luders_ccrrt, luders_ccrrtstar} and they may lead to significant miscalculation of risk. Recently, this shortcoming due to non-Gaussian stochastic uncertainties in motion planning was considered in \cite{summers_risk, venki_risk, han_risk, lathrop2021distributionally, luders2011probabilistic} using approaches like distributionally robust optimization (DRO) \cite{mohajerin2018data, hota2019data} and conditional value-at-risk \cite{Astghik_Cvar}. 

Though many risk-bounded path planning techniques work with stochastic uncertainties characterised by their moments, they suffer from unwanted conservatism as a result of the uniform risk allocation (URA) being used. That is, given a total risk budget for safety violation, it is a common practice to distribute it uniformly across all the obstacles and the planning horizon. The conservatism drawback of URA was identified in \cite{ono_iter_risk} and approached using a two-stage optimization method based iterative risk allocation strategy. This strategy has yielded promising, less conservative and guaranteed results for covariance steering problems, for example in spacecraft maneuvering as in \cite{pilipovsky2021covariance, renganathancovsteer}. 

The conservatism that arises in motion planning because of the lack of exact information about the stochastic uncertainties should not restrict the ability of sampling-based algorithms like RRT \cite{lavalle1998rapidly} to efficiently explore the state-space too much. To overcome this shortcoming, one can use non-uniform risk allocation technique as in \cite{ono_planning, vitus_risk_feedback, kajsa_ms_thesis}, so that they do not exceed the risks allocated uniformly. 

\emph{Contributions:} We extend the DR-RRT algorithm presented in \cite{summers_risk} by embedding our proposed risk allocation technique into it. Our main contributions are as follows:
\begin{enumerate}
    \item We propose a new distributionally robust risk allocation technique called Exact Risk Allocation (ERA) for sampling-based motion planning algorithms that allocates as minimum risks as possible while respecting a given total risk budget (See Theorem \ref{thm_1}).
    \item We prove that all feasible paths with the uniform risk allocation of length $\mathrm{\mathrm{T_{path}}} \in \mathbb{N}_{\geq 1}$ and total risk budget $\mathrm{\mathrm{\Delta_{path}}} \in (0,0.5]$ is also feasible with the ERA but the vice-versa is not necessarily true (See Theorem \ref{thm_2}). 
    \item We demonstrate our proposed technique using simulation results and show that by switching from uniform risk allocation to ERA, it is possible to give the same risk guarantees for sampling-based motion planning algorithms while maintaining a reduced conservatism.
\end{enumerate}

The rest of the paper is organized as follows: The main problem statement of risk-bounded motion planning with risk allocation is presented in \S \ref{sec_prob_formulation}. Then, the proposed Distributionally Robust Exact Risk Allocation (DR-ERA) algorithm is presented in \S \ref{sec_dr_era}. Subsequently, the embedding of DR-ERA into the sampling-based motion planning algorithm RRT is discussed in \S\ref{sec_dr_rrt_era}. Then, the proposed idea is demonstrated using simulation results in \S \ref{sec_num_sim}. Finally, the paper is closed in \S \ref{sec_conclusion} along with the directions for future research.

\section*{Notations \& Preliminaries}
The set of real numbers and natural numbers are denoted by $\mathbb{R}$ and $\mathbb{N}$, respectively. The subset of natural numbers between and including $a$ and $b$ with $a < b$ and beyond $b$ with $b$ included are denoted by $[a:b]$ and $\mathbb{N}_{\geq b}$, respectively. The operators $\oplus, \backslash$, and $|\cdot|$ denote the set translation, set subtraction and set cardinality, respectively. The transpose of a matrix $A$ is denoted by $A^{\mathsf{T}}$. An identity matrix of dimension $n$ is denoted by $I_{n}$. For a non-zero vector $x \in \bbr^{n}$ and a matrix $P \in \bbs^{n}_{++}$ (here, $\bbs^{n}_{++}$ denotes the set of positive definite matrices), let $\left \| x \right \Vert_{P} = \sqrt{x^{\mathsf{T}} P x}$. A binary condition being true and false is denoted by $\top$ and $\bot$, respectively. 

\section{Problem Formulation} \label{sec_prob_formulation}
\subsection{Robot \& Environment Model}
Our problem formulation follows the problem setup given in \cite{summers_risk}. Consider a robot operating in an uncertain environment, $\mathcal{X} \subseteq \bbr^n$, with dynamic obstacles. The set of obstacles is denoted as $\mathcal{B}$ with $\left | \mathcal{B} \right \vert = N$. The robot model is given by a stochastic discrete-time linear time invariant system  
\begin{equation} \label{eqn_robot_dynamics}
x_{k+1}= Ax_k + Bu_k + w_k,
\end{equation}
where $x_{k} \in \bbr^n$ and $u_{k} \in \bbr^m$ are the system state and input at time step $k$, respectively. The matrices $A$ and $B$ denote the dynamics matrix and the input matrix, respectively. The process noise $w_k \in \bbr^m$ is a zero-mean random vector that is independent and identically distributed over time. The distribution of $w_k$, namely $\mathbb{P}_{w_{k}}$, is unknown but belongs to a moment-based ambiguity set of distributions, 
\begin{equation}
\calp^w = \left\{ \PP_{w_{k}} \mid \bbe[w_k]=0, \bbe[w_k w_k^{\mathsf{T}}] = \Sigma_{w} \right\}.
\end{equation}
The initial state $x_{0}$ is subject to a similar uncertainty model as the process noise, with its distribution belonging to a moment-based ambiguity set, $\mathbb{P}_{x_{0}} \in \calp^{x_0}$, given by
\begin{equation}
\footnotesize
    \calp^{x_0}= \left\{ \PP_{x_{0}} \mid \bbe[x_0]=\hat{x}_0, \bbe[(x_0-\hat{x}_0) (x_0-\hat{x}_0)^{\mathsf{T}}] = \Sigma_{x_0} \right\}.
\end{equation}
We assume the obstacles to perform a random walk around their initial position. That is, 
\begin{equation} \label{eqn_obs_set_dynamics}
\mathcal{O}_{ik}=\mathcal{O}_{i}^{0} \oplus \hat{c}_{ik} \oplus c_{ik}, \quad \forall i \in \mathcal{B},
\end{equation}
where $\mathcal{O}_{ik}$ denotes the position of the obstacle $i \in \mathcal{B}$ at time step $k$. The known shape of the obstacle is represented by  $\mathcal{O}_i^0 \subset \bbr^n$, while $\hat{c}_{ik}$ represents a known nominal translation. The location uncertainty and unpredictable motion of  obstacle $i \in \mathcal{B}$ is represented by $c_{ik} \in \bbr^n$, which is a random vector with unknown distribution  $\PP_{ik}^c \in \calp_{ik}^c$. The robot is expected to be in the free space at all time steps $k$. That is, 
\begin{align} \label{eqn_x_free_dyn}
x_{k} \in \mathcal{X}^{\mathrm{free}}_{k} := \mathcal{X} \setminus \bigcup_{i \in \mathcal{B}} \mathcal{O}_{ik},
\end{align}
and the input of the robot is subject to the constraint $u_{k} \in \mathcal{U}$. Here, $\mathcal{U}$, $\mathcal{X}$ and $\mathcal{O}_{ik}$ are all assumed to be convex polytopes that can be represented by a conjunction of linear inequalities  
\begin{align}
    \calu &= \left\{ u_k \mid A_{u} u_k \le b_{u} \right\}, \label{eqn_controlconstraint_polytope}\\
    \calx &= \left\{ x_k \mid A_{x} x_k \le b_{x} \right\}, \label{eqn_environment_polytope}\\
    \calo_{ik} &= \left \{ x_k \mid A_{ik} x_k \le b_{ik} \right\}. \label{eqn_obstacle_polytope}
\end{align}

\subsection{Distributionally Robust Path Planning Problem}
\begin{problem} \label{problem_1}
We seek to approximately solve the distributionally robust risk-constrained path planning problem. Given an uncertain initial state $x_0 \sim \mathbb{P}_{x_{0}}$ and a set of goal locations $\mathcal{X}_{\mathrm{goal}} \subset \bbr^n$, we seek to find a feedback  control policy $\pi = \{\pi_k\}^{T-1}_{k=0}$ such that applying the control inputs $u_k = \pi_k(x_k), k = [0:T-1]$ yields a probabilistically feasible path from the initial state to the goal that minimises a finite-horizon cost function. That is, 
\begin{subequations} \label{eqn_problem_1}
\begin{alignat}{2}
&\! \underset{\pi}{\mathrm{minimize}}        &\qquad& \sum_{k=0}^{T-1} \ell_t(\hat{x}_k, \calx_{\mathrm{goal}}, u_t)+ \ell_T(\hat{x}_T, \calx_{\mathrm{goal}}) \label{eq:optProb}\\
&\mathrm{subject} \, \mathrm{to} &      & \eqref{eqn_robot_dynamics}, x_0 \sim \mathbb{P}_{x_0} \in \calp^x, w_{k} \sim \mathbb{P}_{w} \in \calp^w,\label{eq:constraint1}\\
&                  &      & \eqref{eqn_obs_set_dynamics}, \eqref{eqn_x_free_dyn}, \, u_k \in \calu, \, c_{ik} \sim \PP_{ik}^c \in \calp_{ik}^c, \label{eq:constraint2} \\
&                  &      & \sup_{\bbp_{x_k} \in \calp^{x_{k}} } \bbp_{x_k} \left [\bigwedge_{k=1}^{T} x_{k} \notin \mathcal{X}^{\mathrm{free}}_{k}\right] \leq \Delta. \label{cc_joint}
\end{alignat}
\end{subequations}
\end{problem}
% \begin{mini!}|l| 
% {\pi}{\sum_{k=0}^{T-1} \ell_t(\hat{x}_k, \calx_{\mathrm{goal}}, u_t)+ \ell_T(\hat{x}_T, \calx_{\mathrm{goal}})}
% {}{}
% \addConstraint{\eqref{eqn_robot_dynamics}, x_0 \sim \mathbb{P}_{x_0} \in \calp^x, w_{k} \sim \mathbb{P}_{w} \in \calp^w}
% \addConstraint{\eqref{eqn_obs_set_dynamics}, \eqref{eqn_x_free_dyn}, \, u_k \in \calu, \, c_{ik} \sim \PP_{ik}^c \in \calp_{ik}^c}
% \addConstraint{ \inf_{\bbp_{x_k} \in \calp^{x_{k}} } \bbp_{x_k} \left [\bigwedge_{k=1}^{T} x_{k} \in \mathcal{X}^{\mathrm{free}}_{k}\right] > 1 - \Delta}. \label{cc_joint} 
% \end{mini!}

Problem 1 is just a reformulated version of the problem in \cite{summers_risk}. Here, $\ell_t(.)$ is the stage cost function quantifying the distance to the goal set and actuator effort and it is expressed in terms of the robot mean state, $\hat{x}_k$. As \eqref{cc_joint} is an infinite dimensional DR risk constraint, solving \eqref{eqn_problem_1} exactly is practically hard and so we resort to approximate solutions using sampling-based motion planning algorithms. The constant $\Delta \in (0,0.5]$ represents the user-prescribed total risk budget for the entire planning horizon, such that the worst-case probability of colliding with any of the $N$ obstacles or being outside $\calx$ over the planning horizon should be at most $\Delta$. As in \cite{summers_risk}, an LQR fixed affine feedback control policy given by $u_{k} = K_{k} x_{k} + g_{k}$ is used, and the state mean $\hat{x}_{k}$ and covariance matrix $\Sigma_{x_{k}}$ evolve as 
\begin{align}
\label{mean_prop}
    \hat{x}_{k+1} &= (A + BK_k)\hat{x}_k + Bg_k, \\
    \Sigma_{x_{k+1}} &= (A+BK_k)\Sigma_{x_k}(A+BK_k)^\mathsf{T} + \Sigma_{w}. \label{cov_prop}
\end{align}
Note that \eqref{cc_joint} can be decomposed into individual chance constraints for each obstacle and the state constraint $\mathcal{X}$ at each time step. The individual risk bound for each obstacle $i \in \mathcal{B}$ and the constraints $j = 1,\dots,n_{e}$ defining $\mathcal{X}$ at time step $k$, denoted by $\delta_{ik}$ and $\kappa_{jk}$ respectively, should respect 
\begin{align} \label{eqn_risk_sum}
\sum_{k=1}^T \sum_{i=1}^N \delta_{ik}  + \sum_{k=1}^T \sum_{j=1}^{n_{e}} \kappa_{jk} \leq \Delta. 
\end{align}
The following lemma is an adaptation of Theorem 1 in \cite{summers_risk} with inclusion of time horizon from $t = 1, \dots,T$.
\begin{lemma}
If \eqref{eqn_risk_sum} holds true, then \eqref{cc_joint} holds true as well if the worst-case probability of colliding with obstacle $i$ and the worst-case probability of violating any one of $j = 1,\dots, n_{e}$ constraints defining $\mathcal{X}$ at time step $k \in [1:T]$ are 
\begin{align} 
    \sup_{\bbp_{x_k} \in \calp^{x_{k}} } \bbp_{x_k} ( x_k \in \mathcal{O}_{ik}) &\leq \delta_{ik}, \quad \mathrm{and},\label{eqn_prob_xk_in_obs_i} \\
    \sup_{\bbp_{x_k} \in \calp^{x} } \bbp_{x_k} ( a^{\mathsf{T}}_{xj} x_{k} \geq a^{\mathsf{T}}_{xj} c_{xj}) &\leq \kappa_{jk}. \label{eqn_prob_xk_outside_X_j}
\end{align}
\end{lemma}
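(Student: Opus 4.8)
The plan is to reduce the joint distributionally robust safety guarantee \eqref{cc_joint} to a sum of individual failure probabilities via a complementation step followed by Boole's inequality. First I would rewrite the infimum of the safety probability over the ambiguity set as one minus the supremum of the failure probability: for any fixed $\bbp_{x_k}$ we have $\bbp_{x_k}[\bigwedge_{k=1}^T x_k \in \mathcal{X}^{\mathrm{free}}_k] = 1 - \bbp_{x_k}[\bigvee_{k=1}^T x_k \notin \mathcal{X}^{\mathrm{free}}_k]$, so taking the infimum over $\calp^{x_k}$ on the left corresponds to taking the supremum of the failure event on the right. It therefore suffices to show that the worst-case probability of the failure event is bounded above by $\Delta$.

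Next I would decompose the failure event set-theoretically. By the definition \eqref{eqn_x_free_dyn}, the event $x_k \notin \mathcal{X}^{\mathrm{free}}_k$ holds exactly when $x_k$ either leaves the polytope $\mathcal{X}$ or enters one of the obstacles $\mathcal{O}_{ik}$; using the half-space representation \eqref{eqn_environment_polytope}, leaving $\mathcal{X}$ means violating at least one of the $n_e$ linear inequalities $a_{xj}^{\mathsf{T}} x_k \geq a_{xj}^{\mathsf{T}} c_{xj}$. Thus the total failure event is contained in the union, over all time steps $k \in [1:T]$, all obstacles $i \in \mathcal{B}$, and all boundary constraints $j$, of the elementary events $\{x_k \in \mathcal{O}_{ik}\}$ and $\{a_{xj}^{\mathsf{T}} x_k \geq a_{xj}^{\mathsf{T}} c_{xj}\}$. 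Applying Boole's inequality to this union for each fixed distribution, and then using the subadditivity of the supremum (the supremum of a sum is at most the sum of the suprema) to distribute $\sup_{\bbp_{x_k} \in \calp^{x_k}}$ across the finitely many terms, I would bound the worst-case failure probability by $\sum_{k=1}^T \big(\sum_{i=1}^N \sup_{\bbp} \bbp[x_k \in \mathcal{O}_{ik}] + \sum_{j=1}^{n_e} \sup_{\bbp} \bbp[a_{xj}^{\mathsf{T}} x_k \geq a_{xj}^{\mathsf{T}} c_{xj}]\big)$. A point worth noting is that after the union bound each surviving term is a time-$k$ marginal probability, so the per-time-step, per-obstacle ambiguity sets are exactly what is needed to bound it.

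Finally, invoking the per-term worst-case bounds \eqref{eqn_prob_xk_in_obs_i} and \eqref{eqn_prob_xk_outside_X_j} replaces each supremum by $\delta_{ik}$ and $\kappa_{jk}$ respectively, and the hypothesis \eqref{eqn_risk_sum} caps the resulting double sum at $\Delta$; subtracting from one then yields \eqref{cc_joint}. I expect the difficulties here to be bookkeeping rather than conceptual: getting the set-complement of the free space exactly right so that the union bound is applied to the correct elementary events, and justifying that the supremum may be pushed inside the finite sum (subadditivity, not an exchange of $\sup$ and $\sum$, is all that is required, and it goes in the favorable direction for an upper bound). One genuine fine point is the strict versus non-strict inequality: Boole's inequality together with \eqref{eqn_risk_sum} delivers $\geq 1 - \Delta$, so the strict form $> 1 - \Delta$ appearing in \eqref{cc_joint} must be obtained either from strictness in the individual tightenings or by reading \eqref{eqn_risk_sum} with strict slack, and I would state this explicitly.
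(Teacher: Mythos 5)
Your proof is correct and follows essentially the same route as the paper's: complement the joint safety event, decompose the failure event over obstacles and state-constraint violations, apply Boole's inequality (with subadditivity of the supremum over the ambiguity set), and then invoke the individual bounds \eqref{eqn_prob_xk_in_obs_i}, \eqref{eqn_prob_xk_outside_X_j} and the budget \eqref{eqn_risk_sum}. Your closing remark about strict versus non-strict inequality is a fine point the paper itself glosses over --- its chain of inequalities only yields a worst-case failure probability $\leq \Delta$, hence safety probability $\geq 1-\Delta$ rather than the strict $> 1-\Delta$ stated in \eqref{cc_joint} --- so flagging it explicitly is a genuine improvement in rigor.
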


\begin{proof}
We know that $x_k \notin \mathcal{X}^{\mathrm{free}}_{k} \iff \left\{ x_k \in \bigcup_{i=1}^N \calo_{ik} \right\} \cup \{ x_k \notin \mathcal{X} \}$. We denote the event of colliding with obstacle $i$ at time step $k$ as $C_{ik} := x_k \in \calo_{ik}$ and similarly let the event of violating the $j$\textsuperscript{th} constraint defining the state constraint set $\mathcal{X}$ at time step $k$ be $D_{jk} := \left\{ a^{\mathsf{T}}_{xj} x_{k} \geq a^{\mathsf{T}}_{xj} c_{xj} \right \}$. Then, the left-hand side of \eqref{cc_joint} can be equivalently written as
\begin{align*}
    &\sup_{\bbp_{x_k} \in \calp^{x_k} } \bbp_{x_k} \left [\bigvee_{k=1}^T \left [ \left\{ x_k \in \bigcup_{i=1}^N \calo_{ik} \right\} \cup \{ x_k \notin \mathcal{X} \} \right] \right] \\
    &\leq \sup_{\bbp_{x_k} \in \calp^{x_k} } \bbp_{x_k} \left [\bigvee_{k=1}^T \bigvee_{i=1}^N C_{ik} \right] + 
    \sup_{\bbp_{x_k} \in \calp^{x_k} } \bbp_{x_k} \left [\bigvee_{k=1}^{T} \bigvee_{j=1}^{n_{e}} D_{jk} \right] \\
    &\leq \sum_{k=1}^{T} \sum_{i=1}^{N} \sup_{\bbp_{x_k} \in \calp^{x_{k}}} \bbp_{x_k} [C_{ik}] + \sum_{k=1}^{T} \sum_{j=1}^{n_{e}} \sup_{\bbp_{x_k} \in \calp^{x}} \bbp_{x_k} [D_{jk}] \\
    &\leq \sum_{k=1}^{T} \sum_{i=1}^{N} \delta_{ik} + \sum_{k=1}^T \sum_{j=1}^{n_{e}} \kappa_{jk} \\
    &\leq \Delta.
\end{align*}
Here, we applied the Boole's inequality \cite{hunter_book} to get the second inequality, \eqref{eqn_prob_xk_in_obs_i} and \eqref{eqn_prob_xk_outside_X_j} to get the third inequality and \eqref{eqn_risk_sum} for the fourth inequality to obtain the desired result.
\end{proof}
\noindent Future research will seek to reduce the conservatism resulting from the Boole's inequality by using sharper bounds such as the Kwerel's, Kounias' or Hunter's bounds \cite{kwerel1975bounds, patil2021upper}. We now reformulate Problem 1 with individual risk bounds.
\begin{problem}
We seek to approximately solve the following distributionally robust path planning problem with individual risk bounds as follows:
\begin{mini!}|l|[2]
{\pi, \delta}{\sum_{k=0}^{T-1} \ell_t(\hat{x}_k, \calx_{\mathrm{goal}}, u_t)+ \ell_T(\hat{x}_T, \calx_{\mathrm{goal}})}
{}{}
\addConstraint{\eqref{eqn_robot_dynamics}, x_0 \sim \mathbb{P}_{x_0} \in \calp^x, w_{k} \sim \mathbb{P}_{w} \in \calp^w}
\addConstraint{\eqref{eqn_obs_set_dynamics}, \eqref{eqn_x_free_dyn}, \, u_k \in \calu, \, c_{ik} \sim \PP_{ik}^c \in \calp_{ik}^c}
\addConstraint{\sup_{\bbp_{x_k} \in \calp^{x} } \bbp_{x_k} ( x_k \in \mathcal{O}_{ik}) \leq \delta_{ik}, \, \substack{\forall i \in \mathcal{B}, \\ \forall k \in [1:T]}} \label{cc_individual}
\addConstraint{\sup_{\bbp_{x_k} \in \calp^{x} } \bbp_{x_k} ( a^{\mathsf{T}}_{xj} x_{k} \geq a^{\mathsf{T}}_{xj} c_{xj}) \leq \kappa_{jk}, \substack{\forall j \in [1:n_{e}], \\ \forall k \in [1:T]} } \label{cc_individual_env}
\addConstraint{\sum_{k=1}^T \sum_{i=1}^N \delta_{ik} + \sum_{k=1}^T \sum_{j=1}^{n_{e}} \kappa_{jk} \leq \Delta. } \label{sumsum}{} 
\end{mini!}
\end{problem}
\noindent The only difference between Problems 1 and 2 is that Problem 2 is expressed with individual risk constraints and the allocated individual risks satisfy the total risk budget $\Delta$. 

\section{Distributionally Robust Risk Allocation} \label{sec_dr_era}
Allocating the individual risks in a non-uniform way while still enforcing the DR risk constraint \eqref{cc_joint} can minimise the conservatism of the resulting path from source to the goal. Let us define the vector of all individual risk bounds as 
\begin{align}
\delta := \begin{bmatrix}\delta_{11} & \dots & \delta_{NT} \end{bmatrix}^{\mathsf{T}}, \kappa := \begin{bmatrix}\kappa_{11} & \dots & \kappa_{n_{e}T} \end{bmatrix}^{\mathsf{T}}.
\end{align} 

\subsection{Risk Treatment: Polytopic Obstacles \& State Constraints}
Since the obstacle $\mathcal{O}_{ik}, \forall i \in \mathcal{B}$ is a convex polytope, it can be represented by $n_i$ hyperplanes. Collision with obstacle $i \in \mathcal{B}$ at time step $k$ occurs if the position of the robot lies inside the obstacle, $x_k \in \calo_{ik}$. This can be expressed as a conjunction of $n_i$ linear constraints on the robot's position, 
\begin{equation}
x_k \in \mathcal{O}_{ik} \quad \iff \quad \bigwedge_{j=0}^{n_i} a^{\mathsf{T}}_{ij} x_{k} < b_{ikj}. 
\end{equation} 
The individual chance constraints given by \eqref{eqn_prob_xk_in_obs_i} encode the fact that the worst-case probability of colliding with obstacle $i$ at time step $k$ should be at most $\delta_{ik}$. That is,
\begin{align}
\sup_{\bbp_{x} \in \calp^{x_{k}}} \bbp_{x_k}  \left [\bigwedge_{j=1}^{n_i} a^{\mathsf{T}}_{ij} x_{k} < a^{\mathsf{T}}_{ij} c_{ikj} \right] \leq \delta_{ik}, \label{cc_obs}
\end{align}
where $c_{ikj}= \hat{c}_{ikj} + c_{ik}$ is a point on the $j$th constraint of obstacle $\calo_{ik}$, with its first and second moments being $\hat{c}_{ikj}$ and $\Sigma_{cjk}$ respectively. The distributionally robust individual risk constraint in \eqref{cc_obs} can be handled through linear constraints on the state mean $\hat{x}_k$ defined using deterministic constraint tightening as in \cite{calafiore2006distributionally, summers_risk}. That is,
\begin{align}
    a^{\mathsf{T}}_{ikj} \hat{x}_{k} &\geq a^{\mathsf{T}}_{ikj} \hat{c}_{ikj} + \gamma^{j}_{ik}, \label{eqn_dr_constraint_tightening}
    \\
    \gamma^{j}_{ik}(\delta_{ik}) &:= \sqrt{ \frac{1-\delta_{ik}}{\delta_{ik}}} \norm{(\Sigma_{x_k} + \Sigma_{cjk})^{\frac{1}{2}}a_{ikj}}_{2}, \label{tight} 
\end{align}
where, $\gamma^{j}_{ik}$ is the deterministic constraint tightening of the $j$\textsuperscript{th} constraint of obstacle $i \in \mathcal{B}$ at time $k$. To this end, we define Boolean quantities $\mathbf{h}^{j}_{ik}$ and $\mathbf{h}_{ik}$ that represent the mean state being outside the tightened $j$th constraint of $\calo_{ik}$ and outside the tightened obstacle $\calo_{ik}$, respectively:
\begin{align}
    \mathbf{h}^{j}_{ik} &= 
    \begin{cases}
    \top, \ \eqref{eqn_dr_constraint_tightening} \text{ is satisfied} \\ 
    \bot, \ \text{otherwise},
    \end{cases} \label{eqn_hjik_boolean}\\
    \mathbf{h}_{ik} &= 
    \begin{cases}
    \top, \ \ \bigvee_{j=1}^{n_i} \mathbf{h}^{j}_{ik} = \top \\
    \bot, \ \text{otherwise}.
    \end{cases} \label{eqn_hik_boolean}
\end{align}
Here, \eqref{eqn_dr_constraint_tightening} encodes the condition that the mean position of the robot should lie outside the tightened $j$\textsuperscript{th} constraint of obstacle $i \in \mathcal{B}$ at time $k$ to fulfill $\mathbf{h}^{j}_{ik} = \top$. A similar approach can be taken for treating the state constraints. The distributionally robust individual risk constraint in \eqref{cc_individual_env} can be handled by linear constraints on the state mean $\hat{x}_k$ defined using deterministic constraint tightening: 
\begin{align} \label{eqn_dr_constraint_tighten_env}
    a^{\mathsf{T}}_{xj} \hat{x}_{k} \leq a^{\mathsf{T}}_{xj} c_{xj} - \underbrace{\sqrt{ \frac{1-\kappa_{jk}}{\kappa_{jk}}} \norm{\Sigma_{x_k}^{\frac{1}{2}}a_{xj}}_{2}}_{:=\eta^{j}_{k}(\kappa_{jk})}.
\end{align}
Similarly, we define Boolean quantities $\mathbf{g}^{j}_{k}$ representing the mean state being inside the tightened $j$\textsuperscript{th} constraint of $\calx$:
\begin{align}
    \mathbf{g}^{j}_{k} &= 
    \begin{cases}
    \top, \ \eqref{eqn_dr_constraint_tighten_env} \text{ is satisfied} \\ 
    \bot, \ \text{otherwise}.
    \end{cases} \label{eqn_gjk_boolean}
\end{align}
Here, $\eta^{j}_{k}$ is the deterministic constraint tightening of the $j$\textsuperscript{th} constraint of $\calx$ at time $k$ and \eqref{eqn_gjk_boolean} encodes the condition that the mean position of the robot should lie inside the tightened state constraint set $\calx$ in order to fulfill $\mathbf{g}^{j}_{k} = \top$.

\subsection{Exact Risk Allocation (ERA) Algorithm} \label{sec:era}
The aim of ERA is to allocate as little risks $\delta_{ik}$ and $\kappa_{jk}$ as possible for all obstacles $i \in \mathcal{B}$ and the constraints defining the state constraint set $\calx$ at all time steps $k$ that fulfill the DR risk constraint in \eqref{cc_individual} and \eqref{cc_individual_env} respectively. Note that ERA cannot be done if the mean state $\hat{x}_k$ is either inside the obstacle or outside $\calx$ as such paths will be deemed as non-feasible. Hence, we define the ERA problem with Boolean conditions for each constraints defining the obstacle $i \in \mathcal{B}$ and the constraints defining the state constraint set $\calx$.  
\begin{problem} \label{prob_4}
Find the minimum risk $\delta_{ik}$ for $i \in \mathcal{B}$ at all time steps $k \in [1:T]$ such that $\mathbf{h}_{ik} = \top$, and the minimum risk $\kappa_{jk}$ for which $\mathbf{g}^{l}_{k} = \top$ for $l=[1:n_{e}]$.
\end{problem} 

The following theorem tells us how to obtain the required minimum risks $\delta_{ik}$ and $\kappa_{jk}$ from \eqref{eqn_dr_constraint_tightening} and \eqref{eqn_dr_constraint_tighten_env}, respectively.

\begin{theorem} \label{thm_1}
The minimum risk for obstacle $i \in \mathcal{B}$ satisfying $\mathbf{h}_{ik} = \top$, at all time steps $k = 1,\dots,T$ is obtained by setting $a^{\mathsf{T}}_{ikj} \hat{x}_{k} = a^{\mathsf{T}}_{ikj} \hat{c}_{ikj} + \gamma^{j}_{ik}$ and is given by
\begin{equation}
\label{eqn_min_risk}
    \delta^{\star}_{ik} = \left(1 + \left(\frac{a^{\mathsf{T}}_{ikj} \hat{x}_{k} - a^{\mathsf{T}}_{ikj}\hat{c}_{ikj}}{\norm{(\Sigma_{x_{k}} + \Sigma_{cjk})^{\frac{1}{2}}a_{ikj}}_2}\right)^2 \right)^{-1},
\end{equation}
and the minimum risk for the $j$\textsuperscript{th} constraint defining $\calx$ satisfying $\mathbf{g}^{j}_{k} = \top, j = 1,\dots,n_{e}$ at all time steps $k = 1,\dots,T$ is obtained by setting $a^{\mathsf{T}}_{xj} \hat{x}_{k} = a^{\mathsf{T}}_{xj} c_{xj} - \eta^{j}_{k}$ and 
\begin{equation}
\label{eqn_min_risk_env}
    \kappa^{\star}_{jk} = \left(1 + \left(\frac{a^{\mathsf{T}}_{xj} c_{xj} - a^{\mathsf{T}}_{xj} \hat{x}_{k}}{\norm{\Sigma_{x_{k}}^{\frac{1}{2}}a_{xj}}_2}\right)^2 \right)^{-1}.
\end{equation}
\end{theorem}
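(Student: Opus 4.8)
The plan is to reduce each Boolean feasibility condition to a single scalar inequality in the risk variable and then exploit the monotonicity of the constraint-tightening function to locate its minimizer. Fix an obstacle $i \in \mathcal{B}$, a time step $k$, and a face index $j$, and write the signed mean margin and the covariance-weighted scale as
\begin{align*}
m_{j} &:= a^{\mathsf{T}}_{ikj}\hat{x}_{k} - a^{\mathsf{T}}_{ikj}\hat{c}_{ikj}, &
s_{j} &:= \norm{(\Sigma_{x_{k}} + \Sigma_{jk}^{c})^{\frac{1}{2}} a_{ikj}}_{2},
\end{align*}
where $s_{j} > 0$ since $\Sigma_{x_{k}} + \Sigma_{jk}^{c} \in \bbs^{n}_{++}$ and $a_{ikj} \neq 0$. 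By the definition of $\mathbf{h}^{j}_{ik}$ in \eqref{eqn_hjik_boolean} together with the tightening \eqref{eqn_dr_constraint_tightening}--\eqref{tight}, the condition $\mathbf{h}^{j}_{ik} = \top$ is exactly
\begin{equation*}
m_{j} \geq \gamma^{j}_{ik}(\delta_{ik}) = \sqrt{\tfrac{1-\delta_{ik}}{\delta_{ik}}}\, s_{j},
\end{equation*}
which is satisfiable only when $m_{j} > 0$, i.e. when the mean state already lies on the safe side of face $j$ (otherwise the path is declared infeasible, as noted preceding Problem \ref{prob_4}).

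The main step is to argue that the smallest admissible risk is attained with the tightening constraint active. Viewing $\gamma^{j}_{ik}$ as a function of $\delta_{ik}$ on $(0,1)$ and using that $\tfrac{1-\delta_{ik}}{\delta_{ik}} = \tfrac{1}{\delta_{ik}} - 1$ is strictly decreasing, one sees that $\gamma^{j}_{ik}(\cdot)$ is a strictly decreasing continuous bijection from $(0,1)$ onto $(0,\infty)$, with $\gamma^{j}_{ik}(\delta_{ik}) \to \infty$ as $\delta_{ik} \to 0^{+}$ and $\gamma^{j}_{ik}(\delta_{ik}) \to 0$ as $\delta_{ik} \to 1^{-}$. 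Hence for fixed $m_{j} > 0$ the feasible set $\{\delta_{ik} : m_{j} \geq \gamma^{j}_{ik}(\delta_{ik})\}$ is the half-interval $[\delta^{\star}_{ik}, 1)$, whose infimum $\delta^{\star}_{ik}$ is the unique point where the inequality binds, $m_{j} = \gamma^{j}_{ik}(\delta^{\star}_{ik})$. This is precisely the claim that the minimizer is obtained by setting $a^{\mathsf{T}}_{ikj}\hat{x}_{k} = a^{\mathsf{T}}_{ikj}\hat{c}_{ikj} + \gamma^{j}_{ik}$.

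It then remains only to invert this equality: squaring $m_{j} = \sqrt{(1-\delta_{ik})/\delta_{ik}}\, s_{j}$ gives $(m_{j}/s_{j})^{2} = 1/\delta_{ik} - 1$, so $\delta^{\star}_{ik} = (1 + (m_{j}/s_{j})^{2})^{-1}$, which is \eqref{eqn_min_risk}. The state-constraint claim \eqref{eqn_min_risk_env} follows verbatim: the condition $\mathbf{g}^{j}_{k} = \top$ from \eqref{eqn_gjk_boolean}--\eqref{eqn_dr_constraint_tighten_env} reads $a^{\mathsf{T}}_{xj}c_{xj} - a^{\mathsf{T}}_{xj}\hat{x}_{k} \geq \eta^{j}_{k}(\kappa_{jk})$ with the same reciprocal-square-root dependence on $\kappa_{jk}$, so the identical monotonicity-plus-inversion argument yields $\kappa^{\star}_{jk} = (1 + ((a^{\mathsf{T}}_{xj}c_{xj} - a^{\mathsf{T}}_{xj}\hat{x}_{k})/\norm{\Sigma_{x_{k}}^{\frac{1}{2}}a_{xj}}_{2})^{2})^{-1}$. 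I expect the only genuine subtlety to be the minimality claim rather than the algebra: one must verify that decreasing $\delta_{ik}$ strictly tightens \eqref{eqn_dr_constraint_tightening}, so that no feasible risk below the active value exists, and observe that satisfying the obstacle-level disjunction $\mathbf{h}_{ik} = \top$ of \eqref{eqn_hik_boolean} then reduces to selecting the smallest of the per-face values $\delta^{\star}_{ik}$ over $j = 1,\dots,n_{i}$.
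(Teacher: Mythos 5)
Your proof is correct and takes essentially the same route as the paper's: both exploit the strict monotonicity of the tightening $\gamma^{j}_{ik}(\cdot)$ in $\delta_{ik}$ to rearrange the condition $\mathbf{h}^{j}_{ik} = \top$ into the lower bound $\delta_{ik} \geq \delta^{\star}_{ik}$, identify the minimum with the active (equality) constraint, and invert the algebra, with the state-constraint case handled verbatim. Your added details---the bijectivity of $\gamma^{j}_{ik}$ onto $(0,\infty)$, the necessity of a positive margin $m_{j} > 0$, and the reduction of the disjunction \eqref{eqn_hik_boolean} to the smallest per-face risk---simply make explicit what the paper's terser argument leaves implicit.
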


\begin{proof}
Since $\Sigma_k$ and $\Sigma_{cjk}$ are known constants and $\sqrt{ \frac{1-\delta_{ik}}{\delta_{ik}}}$ is a decreasing function of $\delta_{ik}$, rearranging \eqref{eqn_hjik_boolean} for the case of $\mathbf{h}^{j}_{ik} = \top, j = 1,\dots,n_{i}$ leads to the below risk bound, 
\begin{equation}
\label{delta_ik}
    \delta_{ik} \geq \underbrace{\left(1 + \left(\frac{a^{\mathsf{T}}_{ikj} \hat{x}_{k} - a^{\mathsf{T}}_{ikj}\hat{c}_{ikj}}{\norm{(\Sigma_{x_{k}} + \Sigma_{cjk})^{\frac{1}{2}}a_{ikj}}_2}\right)^2 \right)^{-1}}_{:= \delta^{\star}_{ik}}.
\end{equation}
Similarly, rearranging \eqref{eqn_gjk_boolean} for the case of $\mathbf{g}^{j}_{k} = \top, j = 1,\dots,n_{e}$ leads to the following individual risk lower bound, 
\begin{equation}
\label{kappa_jk}
    \kappa_{jk} \geq \underbrace{\left(1 + \left(\frac{a^{\mathsf{T}}_{xj} c_{xj} - a^{\mathsf{T}}_{xj} \hat{x}_{k}}{\norm{\Sigma_{x_{k}}^{\frac{1}{2}}a_{xj}}_2} \right)^2 \right)^{-1}}_{:= \kappa^{\star}_{jk}}.
\end{equation}
\end{proof}

\section{Distributionally Robust RRT with Exact Risk Allocation} \label{sec_dr_rrt_era}
In this section, we extend the sampling-based Distributionally Robust RRT (DR-RRT) algorithm in \cite{summers_risk} which grows trees of state distributions while enforcing DR risk constraints, using the proposed ERA algorithm. Usually, DR-RRT employs the URA as it trivially satisfies \eqref{eqn_risk_sum}, where each obstacle and time step are first assigned the same risk $\delta_{ik} = \frac{\Delta}{TN}$, and the assigned risks are then used to check the probabilistic feasibility of the generated path according to the constraint in \eqref{eqn_hjik_boolean}. With ERA, the problem is tackled in the opposite way by \emph{first} assigning risks $\delta_{ik}$ that fulfill the DR risk constraints in \eqref{eqn_hjik_boolean} and \emph{then} checking if \eqref{eqn_risk_sum} holds.

\subsection{Tree Expansion}
Algorithm \ref{alg:tree} outlines the DR-RRT tree expansion with Exact Risk Allocation incorporated and the readers are referred to \cite{summers_risk} for information on DR-RRT tree expansion. Note that the trajectory generated from the LQR finite horizon steering function does not depend on the risk allocations $\delta_{ik}$. In the next step, Exact Risk Allocation is applied to the generated trajectory, as outlined in Algorithm \ref{alg:ERA}. The ERA-function returns risk allocations $\delta_{ik}$ and $\kappa_{jk}$ for all obstacles $i \in \mathcal{B}$ and all the constraints $j = 1,\dots,n_{e}$ defining the state constraint set $\calx$ at all time steps $k$ along the trajectory. The risk allocation is done so that \eqref{eqn_dr_constraint_tightening} and \eqref{eqn_dr_constraint_tighten_env} are fulfilled and the total risk leading up to each time step is obtained by summing up all risk allocations $\delta_{ik}$ up to a certain time step, denoted as $k^{\star}$. The path from $\mathrm{N_{near}}$ up to time step $\mathrm{T_{steer}}$ is then checked for distributionally robust feasibility, as outlined in Algorithm \ref{alg:fea}. If the path is feasible, the total cost $J$ and the residual risk $\mathrm{\delta_{res}}$ are calculated and used to assign a score to the path from the near node $\mathrm{N_{near}}$ as $\mathrm{score(N_{near})} = (\theta_{J}/J) + \mathrm{\theta_{res}} \mathrm{\delta_{res}}$, where $\theta_{J}, \mathrm{\theta_{res}} \in [0,1], \theta_{J} + \mathrm{\theta_{res}} = 1$ are left to the user's choice to emphasize the cost and the residual risk appropriately. When paths from all near nodes that are DR-feasible have been assigned a score, the path with the best score is chosen and a new node and edge is added to the tree. The residual risk $\mathrm{\delta_{res}}$ is also added to the node, which can in turn be re-allocated as described in subsection \ref{sec:fea} when steering from this node to a new sample. Feasible portions of the trajectories are also added to the tree in the same manner. 

\subsection{Feasibility Check} \label{sec:fea}
The feasibility check is based on the total risk allocated up to time step $k$, denoted by $\mathrm{\delta_{tot}}(k)$. The risk constraints \eqref{cc_individual}--\eqref{sumsum} have to hold for the entire planning horizon $T$ and not just over the steering horizons $\mathrm{T_{steer}} \in \mathbb{N}_{\geq 1}$. To assure this is the case, we begin by distributing the total risk budget $\Delta$ uniformly over all steering horizons according to $\mathrm{\Delta_{steer}}= \frac{\Delta \cdot \mathrm{T_{steer}}}{T}$, where $\mathrm{\Delta_{steer}}$ is the risk budget over each steering horizon $\mathrm{T_{steer}}$. An entire trajectory from a near node to the sample is deemed to be feasible, provided the total risk allocated over the steering horizon, $\mathrm{\delta_{tot}}(\mathrm{T_{steer}}) \leq \mathrm{\Delta_{steer}}$. A similar reasoning can be applied to assure the feasibility of a portion of the steered path, from a near node up to a certain time step $k$. Then, the total risk allocated up to that time step, $\mathrm{\delta_{tot}}(k)$, has to fulfill $\mathrm{\delta_{tot}}(k) \le \Delta_{k}$, where $\Delta_{k} := \frac{k \cdot \mathrm{\Delta_{steer}}}{\mathrm{T_{steer}}}$ is the uniformly allocated risk budget up to time step $k$. This means that a trajectory, or a portion of it, is considered feasible only when the total allocated risk (using ERA) does not exceed the corresponding total uniformly allocated risk. While this method has less conservatism than URA, there are still a lot of conservatism present from allocating the total risk budget uniformly over all steering horizons. This conservatism can be mitigated by reallocating residual risk of a horizon to the subsequent steering. If the entire risk budget $\mathrm{\Delta_{steer}}$ or $\Delta_k$  is not used, such that $\mathrm{\delta_{tot}}(\mathrm{T_{steer}}) < \mathrm{\Delta_{steer}}$ or $\mathrm{\delta_{tot}}(k) < \Delta_k$, a residual for the newly generated node at time step $k$ or $\mathrm{T_{steer}}$ can be created according to
\begin{align}
\label{aaaa}
    \mathrm{\delta_{res}} &= \mathrm{\Delta_{steer}} - \mathrm{\delta_{tot}}(\mathrm{T_{steer}})  \text{ or}\\
    \mathrm{\delta_{res}} &= \Delta_k - \mathrm{\delta_{tot}}(k). 
\end{align}    
These residual risks can then be reallocated to new trajectories generated from this node. When a new point $x_{s}$ is sampled, the residual risk of the near node $\mathrm{\delta_{res}}[\mathrm{N_{near}}]$ can be allocated to the trajectory generated by steering from $\mathrm{N_{near}}$ to $x_{s}$. The total risk budget for the new trajectory or its portion is then $\mathrm{\Delta_{steer}} + \mathrm{\delta_{res}}[\mathrm{N_{near}}]$ or $\Delta_k + \mathrm{\delta_{res}}[\mathrm{N_{near}}]$, respectively. Then, the feasibility of the trajectory generated from $\mathrm{N_{near}}$ depends upon the relaxed risk budget constraints 
\begin{align}
\label{cc_k_res}
    \mathrm{\delta_{tot}}(\mathrm{T_{steer}}) &\le \mathrm{\Delta_{steer}} + \mathrm{\delta_{res}}[\mathrm{N_{near}}] \text{ or}\\
    \mathrm{\delta_{tot}}(k) &\le \Delta_{k} + \mathrm{\delta_{res}}[\mathrm{N_{near}}], \label{cc_relaxed}
\end{align}
and the residual of $\mathrm{N_{near}}$ is added to the residual of newly created nodes originating from $\mathrm{N_{near}}$. That is, 
\begin{align}
    \mathrm{\delta_{res}} &= \mathrm{\Delta_{steer}} + \mathrm{\delta_{res}}[\mathrm{N_{near}}]- \mathrm{\delta_{tot}}(\mathrm{T_{steer}}) \text{ or} \\
    \mathrm{\delta_{res}} &= \Delta_k + \mathrm{\delta_{res}}[\mathrm{N_{near}}]- \mathrm{\delta_{tot}}(k). 
\end{align}
% new subsection? Title: consevatism ? 
\noindent \textbf{Remarks:} Note that the above risk allocation procedure still has some conservatism. An inevitable conservatism stems from the usage of Boole's inequality to decompose the joint  risk constraint in \eqref{cc_joint} into individual risk constraints. Though some trajectories are deemed to be infeasible and dismissed as they fail to satisfy \eqref{cc_k_res}, they could be potentially stored with the hope that they become feasible when they are connected with new trajectories such that the \emph{combination} of the trajectories becomes risk-feasible. Albeit, such an effort would come at the expense of increased computational burden and memory storage along with the book-keeping to correctly identify feasible branches as near nodes to a random sample during the RRT tree expansion. For the ease of exposition, we decided not to implement the above mentioned aspects and only reallocate risk to future horizons. Interested readers are referred to \cite{kajsa_ms_thesis} for additional details.
\begin{theorem} \label{thm_2}
All DR-RRT paths feasible with the URA of length $\mathrm{T_{path}}$ and total risk budget $\mathrm{\Delta_{path}} \in (0,0.5]$ is also feasible with the ERA but the opposite is not necessarily true. 
\end{theorem}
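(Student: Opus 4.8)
The plan is to prove the two directions separately, after first pinning down what ``feasible with URA'' and ``feasible with ERA'' mean in terms of the machinery of \S\ref{sec_dr_era}. Under URA every individual risk---one per obstacle--time pair and one per state-constraint--time pair---receives a common value $\bar\delta$ chosen so that the totals in \eqref{eqn_risk_sum} equal $\mathrm{\Delta_{path}}$, and the path is declared feasible precisely when $\mathbf{h}_{ik}=\top$ and $\mathbf{g}^{j}_{k}=\top$ hold at these values for every index. Under ERA the risks are instead set to the minimal feasible values $\delta^{\star}_{ik},\kappa^{\star}_{jk}$ of Theorem \ref{thm_1}, and the path is feasible precisely when these satisfy \eqref{eqn_risk_sum}. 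Since \eqref{tight} and \eqref{eqn_dr_constraint_tighten_env} share the same structure, I would carry the argument out for the obstacle risks $\delta$ and remark that the $\kappa$ terms are identical.

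For the forward implication the key fact is that $\gamma^{j}_{ik}(\delta_{ik})$ in \eqref{tight} is strictly decreasing in $\delta_{ik}$, because $\sqrt{(1-\delta_{ik})/\delta_{ik}}$ is. Hence for a fixed mean $\hat{x}_k$ the separation inequality \eqref{eqn_dr_constraint_tightening} defining $\mathbf{h}_{ik}=\top$ becomes easier to satisfy as $\delta_{ik}$ grows, so $\mathbf{h}_{ik}=\top$ holds for every $\delta_{ik}\ge\delta^{\star}_{ik}$ and fails for every $\delta_{ik}<\delta^{\star}_{ik}$, with $\delta^{\star}_{ik}$ the threshold of Theorem \ref{thm_1} obtained by setting \eqref{eqn_dr_constraint_tightening} to equality. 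If the path is URA-feasible, then $\mathbf{h}_{ik}=\top$ at $\delta_{ik}=\bar\delta$, which forces $\delta^{\star}_{ik}\le\bar\delta$, and likewise $\kappa^{\star}_{jk}\le\bar\delta$, for every index. Summing all individual risks then gives $\sum_{k}\sum_{i}\delta^{\star}_{ik}+\sum_{k}\sum_{j}\kappa^{\star}_{jk}\le\mathrm{\Delta_{path}}$ by the defining property of $\bar\delta$, so \eqref{eqn_risk_sum} holds and the path is ERA-feasible. I would also note that the residual-reallocation bookkeeping of \S\ref{sec:fea} merely redistributes the fixed budget $\mathrm{\Delta_{path}}$ across steering horizons and never enlarges it, so it leaves this aggregate comparison intact.

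For the converse I would display an explicit instance in which ERA's non-uniformity pays off. Take a single time step, two obstacles and no binding state constraints, so $\bar\delta=\mathrm{\Delta_{path}}/2$, and set $\mathrm{\Delta_{path}}=0.5$, giving $\bar\delta=0.25$. Position the mean $\hat{x}_1$ so that obstacle $1$ is cleared only marginally while obstacle $2$ is far: in the notation of \eqref{eqn_min_risk}, arrange the best-face normalized separation $r_i$ so that $r_1$ is slightly above $1$ and $r_2$ is large. Then $\delta^{\star}_{1}=(1+r_1^2)^{-1}$ is slightly below $0.5$ and $\delta^{\star}_{2}=(1+r_2^2)^{-1}$ is near $0$, and one can choose $r_1,r_2$ with $\delta^{\star}_{1}+\delta^{\star}_{2}\le 0.5$, so the path is ERA-feasible. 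But $\delta^{\star}_{1}>0.25=\bar\delta$ places the uniform value below the feasibility threshold of obstacle $1$, so $\mathbf{h}_{1}=\bot$ under URA and the path is URA-infeasible, proving the converse fails.

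The monotonicity argument and the summation make the forward direction routine; the delicate step is the converse construction, where three inequalities must hold at once: each minimal risk must lie in the admissible range $(0,0.5]$---by \eqref{eqn_min_risk} this is exactly $r_i\ge 1$, i.e. the mean must clear each obstacle by at least one covariance-weighted unit---their sum must respect the global budget $\mathrm{\Delta_{path}}$, and the near obstacle must individually exceed the uniform share $\bar\delta$. Reconciling these is the crux, and the configuration of one near and one far obstacle with $\mathrm{\Delta_{path}}=0.5$ is the cleanest way to satisfy all three simultaneously.
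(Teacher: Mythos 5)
Your proof is correct, and its forward direction is essentially the paper's own argument: the monotonicity of the tightening $\gamma^{j}_{ik}(\cdot)$ in \eqref{tight} turns URA feasibility at the uniform value into the pointwise bound $\delta^{\star}_{ik}\le\delta_{\mathrm{uni}}$, and summing over all indices recovers \eqref{eqn_risk_sum}; your extra remarks covering the $\kappa$ terms and the steering-horizon residual bookkeeping go slightly beyond what the paper writes down but are sound. Where you genuinely diverge is the converse. The paper argues abstractly: it supposes an ERA-feasible path with some $\delta_{ik}<\delta_{\mathrm{uni}}$ and total $\bar{\Delta}_{\mathrm{path}}<\mathrm{\Delta_{path}}$, and asserts that \eqref{eqn_hik_boolean} evaluated at $\gamma^{j}_{ik}(\delta_{\mathrm{uni}})$ and the risk-sum condition cannot hold simultaneously. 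As written that inference is dubious --- indeed, a path all of whose exact risks satisfy $\delta_{ik}<\delta_{\mathrm{uni}}$ clears every obstacle by \emph{more} than the uniform tightening and is therefore URA-feasible, so infeasibility under URA cannot come from an index where the exact risk is \emph{below} the uniform share; it must come from an index where it is \emph{above}. Your construction supplies exactly this: one marginally cleared obstacle with $\delta^{\star}_{1}=(1+r_1^2)^{-1}$ just under $0.5$ (hence above $\delta_{\mathrm{uni}}=0.25$, so the URA check \eqref{eqn_hik_boolean} fails there) and one distant obstacle with $\delta^{\star}_{2}\approx 0$, keeping the total within $\mathrm{\Delta_{path}}=0.5$ so ERA accepts the path. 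This explicit near/far configuration, together with your observation that admissibility $\delta^{\star}_i\le 0.5$ corresponds to the normalized clearance $r_i\ge 1$ in \eqref{eqn_min_risk}, makes the existence claim airtight where the paper's version is only a (garbled) gesture at the same idea; the price is that your argument is tied to a concrete instance, but for an existential claim that is all that is required.
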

\begin{proof}
Without loss of generality, we present the proof assuming that the environmental borders given by \eqref{eqn_environment_polytope} are not treated probabilistically. Consider a path that is feasible with URA meaning that all risk allocations are assigned the same value $\delta_{\mathrm{uni}} = \frac{\mathrm{\Delta_{path}}}{N \cdot \mathrm{T_{path}}}$ and they satisfy \eqref{eqn_risk_sum} and 
\begin{equation}
    \vee_{j=1}^{n_i} \, (a^{\mathsf{T}}_{ij} \hat{x}_{k} - a^{\mathsf{T}}_{ij} \hat{c}_{ikj} \geq \gamma^{j}_{ik}(\delta_{\mathrm{uni}})) , \ \, \substack{\forall i \in [1:N], \\\forall k \in [1:\mathrm{T_{path}}].} \nonumber
\end{equation}
On the other hand, ERA sets risk allocations $\delta_{ik}$ such that 
\begin{equation}
    \vee_{j=1}^{n_i} \, (a^{\mathsf{T}}_{ij} \hat{x}_{k} - a^{\mathsf{T}}_{ij} \hat{c}_{ikj} = \gamma^{j}_{ik}(\delta_{ik})), \ \, \substack{\forall i \in [1:N], \\\forall k \in [1:\mathrm{T_{path}}].} \nonumber   
\end{equation}
Then, $\forall i \in \mathcal{B}, \forall k = 1,\dots,\mathrm{T_{path}}$, and $j = 1,\dots,n_{i}$ 
\begin{align*}
    \gamma^{j}_{ik}(\delta_{\mathrm{uni}})) \leq \gamma^{j}_{ik}(\delta_{ik})) &\iff \sqrt{\frac{1-\delta_{\mathrm{uni}}}{\delta_{\mathrm{uni}}}} \leq \sqrt{\frac{1-\delta_{ik}}{\delta_{ik}}} \\
    &\iff \delta_{ik} \leq \delta_{\mathrm{uni}}. 
\end{align*}
Further, the sum of all exact risk allocations satisfies
\begin{align*}
    \sum_{k=1}^{\mathrm{T_{path}}} \sum_{i=1}^N \delta_{ik} \leq \sum_{k=1}^{\mathrm{T_{path}}} \sum_{i=1}^N \delta_{\mathrm{uni}} = \sum_{k=1}^{\mathrm{T_{path}}} \sum_{i=1}^N \frac{\mathrm{\Delta_{path}}}{N \cdot \mathrm{T_{path}}} = \mathrm{\Delta_{path}}. \nonumber
\end{align*}
Thus, the path is also feasible with the ERA. Conversely, we just need to prove that there exists a path which is feasible with ERA but not with URA. Suppose that at time step $k, \exists i \in \mathcal{B}$ such that the given path with risk allocation $\delta_{ik} < \delta_{\mathrm{uni}}$ is feasible with ERA meaning that $\sum_{k=1}^{\mathrm{T_{path}}} \sum_{i=1}^N \delta_{ik} = \mathrm{\bar{\Delta}_{path}} < \mathrm{\Delta_{path}}$. Since, $\sum_{k=1}^{\mathrm{T_{path}}} \sum_{i=1}^N \delta_{\mathrm{uni}} = \mathrm{\Delta_{path}} > \mathrm{\bar{\Delta}_{path}}$, we can conclude that $\exists j = 1,\dots,n_{i}$ for which $\gamma^{j}_{ik}(\delta_{\mathrm{uni}}) < \gamma^{j}_{ik}(\delta_{ik})$, and the path will be deemed infeasible with URA as both \eqref{eqn_hik_boolean} with $\gamma^{j}_{ik}(\delta_{\mathrm{uni}})$ and the risk summation condition that $\sum_{k=1}^{\mathrm{T_{path}}} \sum_{i=1}^N \delta_{\mathrm{uni}} \leq \mathrm{\Delta_{path}}$ cannot hold true simultaneously. Hence, there exist paths that are feasible with ERA but not with URA.
\end{proof}

%%%%%%%%%%%%%%%%%%%%%%% Algorithm 1: tree %%%%%%%%%%%%%%%%%%%%%
\begin{algorithm}
\caption{\texttt{DR-RRT With ERA: Tree Expansion}}\label{alg:tree}
\begin{algorithmic}
\STATE \emph{Inputs:} Tree $\mathcal{T}$, time $k$, $\mathrm{T_{steer}}$, $\theta_{J}, \mathrm{\theta_{res}} \in [0,1]$ 
\STATE $x_s = $ sample$(\mathcal{X}^{\mathrm{free}}_{k})$
\STATE $\mathrm{N_{near}} = $ NearestMNodes$(x_s,\mathcal{T}, M)$
\FORALL{$\mathrm{N_{near}}$}
  \STATE $(\mathrm{\hat{x}_{path}}, \mathrm{\Sigma_{path}}) = $ Steer$(\mathrm{N_{near}}, x_s, \mathrm{T_{steer}})$
  \STATE $\delta, \kappa = $ ExactRiskAllocation$(\mathrm{\hat{x}_{path}},\mathrm{\Sigma_{path}}, \mathrm{T_{steer}})$
  \STATE $\mathrm{\delta_{tot}}({k^{\star}}) = \sum_{k=1}^{k^{\star}} (\sum_{i=1}^N \delta_{ik} + \sum_{j=1}^{n_{e}} \kappa_{jk}), \, \forall k^{\star} \leq \mathrm{T_{steer}}$
    \IF{DRFeasible$(\mathrm{\delta_{tot}}(\mathrm{T_{steer}}), \mathrm{\delta_{res}}[\mathrm{N_{near}}])$}
    \STATE $J = J[\mathrm{N_{near}}] + J(\mathrm{\hat{x}_{path}}, \mathrm{\Sigma_{path}})$
    \STATE $\mathrm{\delta_{res}}= \mathrm{\delta_{res}}[\mathrm{N_{near}}] + \mathrm{\Delta_{steer}} - \mathrm{\delta_{tot}}(\mathrm{T_{steer}})$
    \STATE score$(\mathrm{N_{near}}) = (\theta_{J}/J) + \mathrm{\theta_{res}} \mathrm{\delta_{res}}$
    \ENDIF
\ENDFOR
\end{algorithmic}
\begin{algorithmic}
\STATE \textbf{Select} path $(\mathrm{\hat{x}_{path}}, \mathrm{\Sigma_{path}})$ from $\mathrm{N_{near}}$ with best score
\STATE $\mathcal{T}$.AddNode$(\mathrm{\hat{x}_{path}}(\mathrm{T_{steer}}), \mathrm{\Sigma_{path}}(\mathrm{T_{steer}}), \mathrm{\delta_{res}})$
\STATE $\mathcal{T}$.AddEdge$(\mathrm{N_{near}},\mathrm{\hat{x}_{path}}(\mathrm{T_{steer}}))$
%Add feasible portions of the path
      \FOR{$k=1: \mathrm{T_{steer}}-1$}
        \IF{DRFeasible$(\mathrm{\delta_{tot}}(k), \mathrm{\delta_{res}}[\mathrm{N_{near}}])$}
           \STATE $\mathrm{\delta_{res}} = \mathrm{\delta_{res}}[\mathrm{N_{near}}] + \Delta_{k} - \mathrm{\delta_{tot}}(k)$
           \STATE $\mathcal{T}$.AddNode$(\mathrm{\hat{x}_{path}}(k), \mathrm{\Sigma_{path}}(k), \mathrm{\delta_{res}})$
           \STATE $\mathcal{T}$.AddEdge$(\mathrm{N_{near}},\mathrm{\hat{x}_{path}}(k))$
       \ENDIF
       \ENDFOR
\end{algorithmic}
\end{algorithm}
%done? just add comments somehow.
%also write about assigning scores, and add algorithm. 
%%%%%%%%%%%%%%%%%%%%%%% END 1 %%%%%%%%%%%%%%%%%%%%%%%%%%%%%%

%%%%%%%%%%%%%%%%%%%% Algorithm 2: ERA %%%%%%%%%%%%%%%%%%%%%%
\begin{algorithm}
\caption{\texttt{ExactRiskAllocation}}\label{alg:ERA}
\begin{algorithmic}
\STATE \emph{Inputs:} Path $\mathrm{\hat{x}_{path}}, \mathrm{\Sigma_{path}}, \mathrm{T_{steer}}$
\STATE \emph{Output:} Risk matrices $\delta \in \mathbb{R}^{N \times \mathrm{T_{steer}}}, \kappa \in \mathbb{R}^{n_{e} \times \mathrm{T_{steer}}}$
\FOR{$k = 1: \mathrm{T_{steer}}$}
  \FOR{$i = 1: N$}
    \STATE Assign $\delta_{ik}$ satisfying \eqref{eqn_min_risk}.
  \ENDFOR
  \FOR{$j = 1: n_{e}$}
    \STATE Assign $\kappa_{jk}$ satisfying \eqref{eqn_min_risk_env}.
  \ENDFOR
\ENDFOR
\STATE $\texttt{return} \, \, \delta, \kappa $
\end{algorithmic}
\end{algorithm}
%%%%%%%%%%%%%%%%%%%%%%% END 2 %%%%%%%%%%%%%%%%%%%%%%%%%%%%%%

%%%%%%%%%%%%%%%%%% Algorithm 3: DR-feasible %%%%%%%%%%%%%%%%%%
\begin{algorithm}[H]
\caption{\texttt{DRFeasible}}\label{alg:fea}
\begin{algorithmic}
\STATE \emph{Inputs:} total risk $\mathrm{\delta_{tot}}(k)$, residual of $\mathrm{N_{near}}$, $\mathrm{\delta_{res}}[\mathrm{N_{near}}]$
\STATE \emph{Output:} true if DR-feasible, otherwise false
\IF{$\mathrm{\delta_{tot}}(k)$ satisfies (\ref{cc_relaxed})}
\STATE $\texttt{return}$ \, true
\ELSE
\STATE $\texttt{return}$ \, false
\ENDIF
\end{algorithmic}
\end{algorithm}

%%%%%%%%%%%%%%%%%%%%%%%%%%%%%%%%%%%%%%%%%%%%%%%%%%%
\section{Numerical Simulations} \label{sec_num_sim}
\begin{figure*}
\begin{minipage}[t]{0.29\textwidth}
  \includegraphics[scale=0.085]{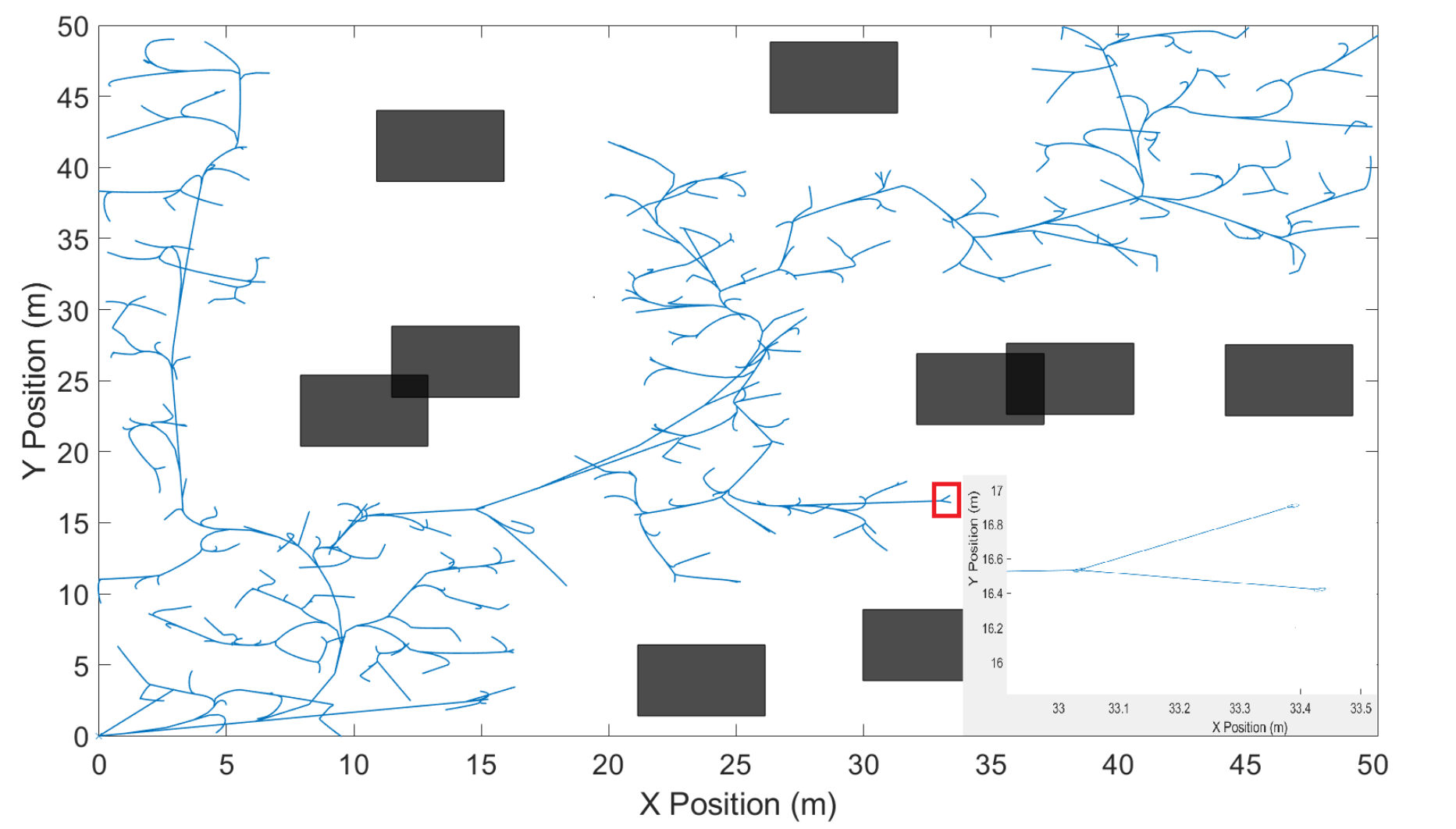}
  \caption{DR-RRT tree in $\mathbb{R}^{2}$ with URA using $\Delta = 0.1$ along with zoomed-in covariances.}
  \label{fig:first}
\end{minipage}%
\hfill % maximize the horizontal separation
\begin{minipage}[t]{0.31\textwidth}
  \includegraphics[scale=0.06]{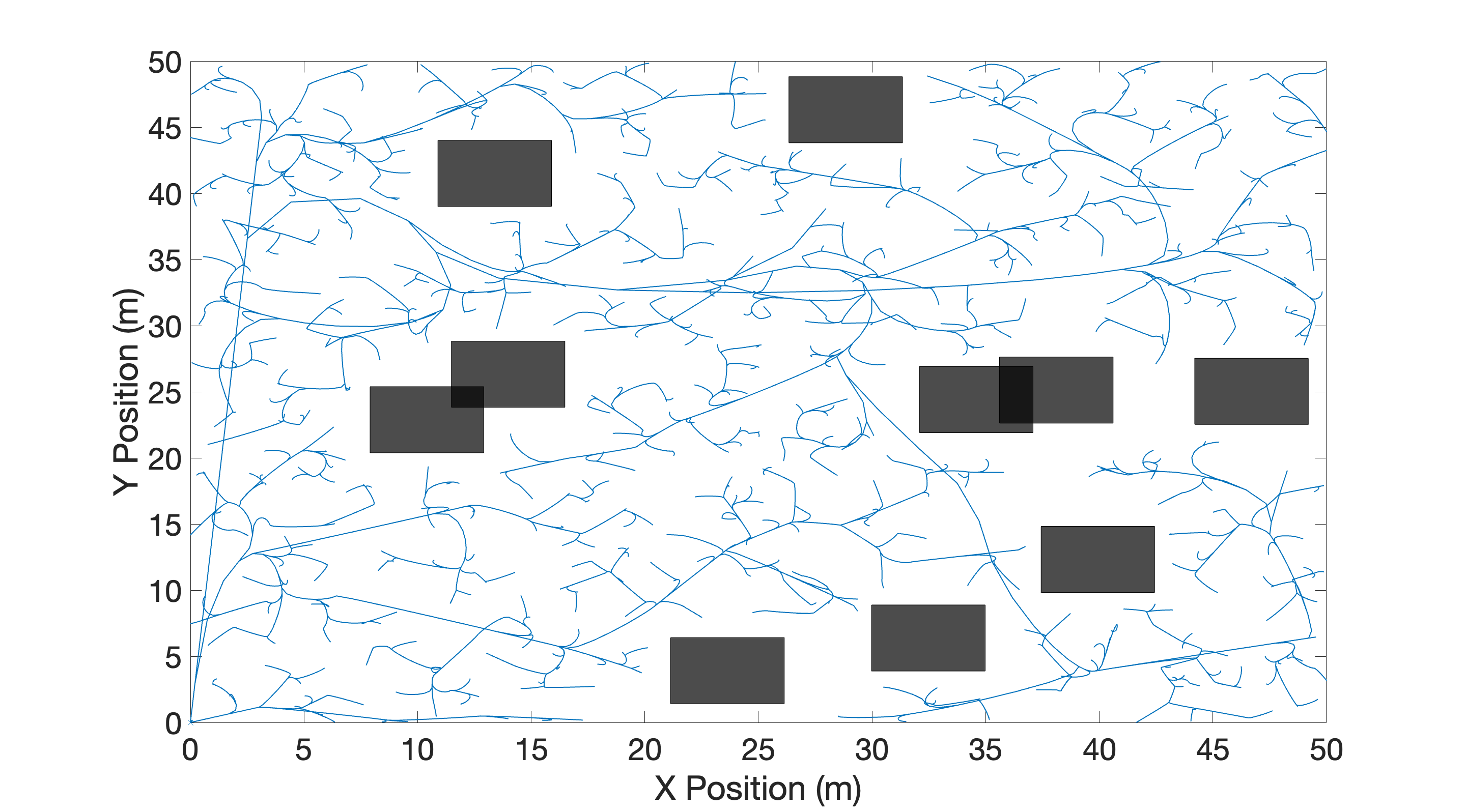}
  \caption{DR-RRT tree in $\mathbb{R}^{2}$ with ERA using $\Delta = 0.1$.}
  \label{fig:second}
\end{minipage}%
\hfill
\begin{minipage}[t]{0.31\textwidth}
  \includegraphics[scale=0.06]{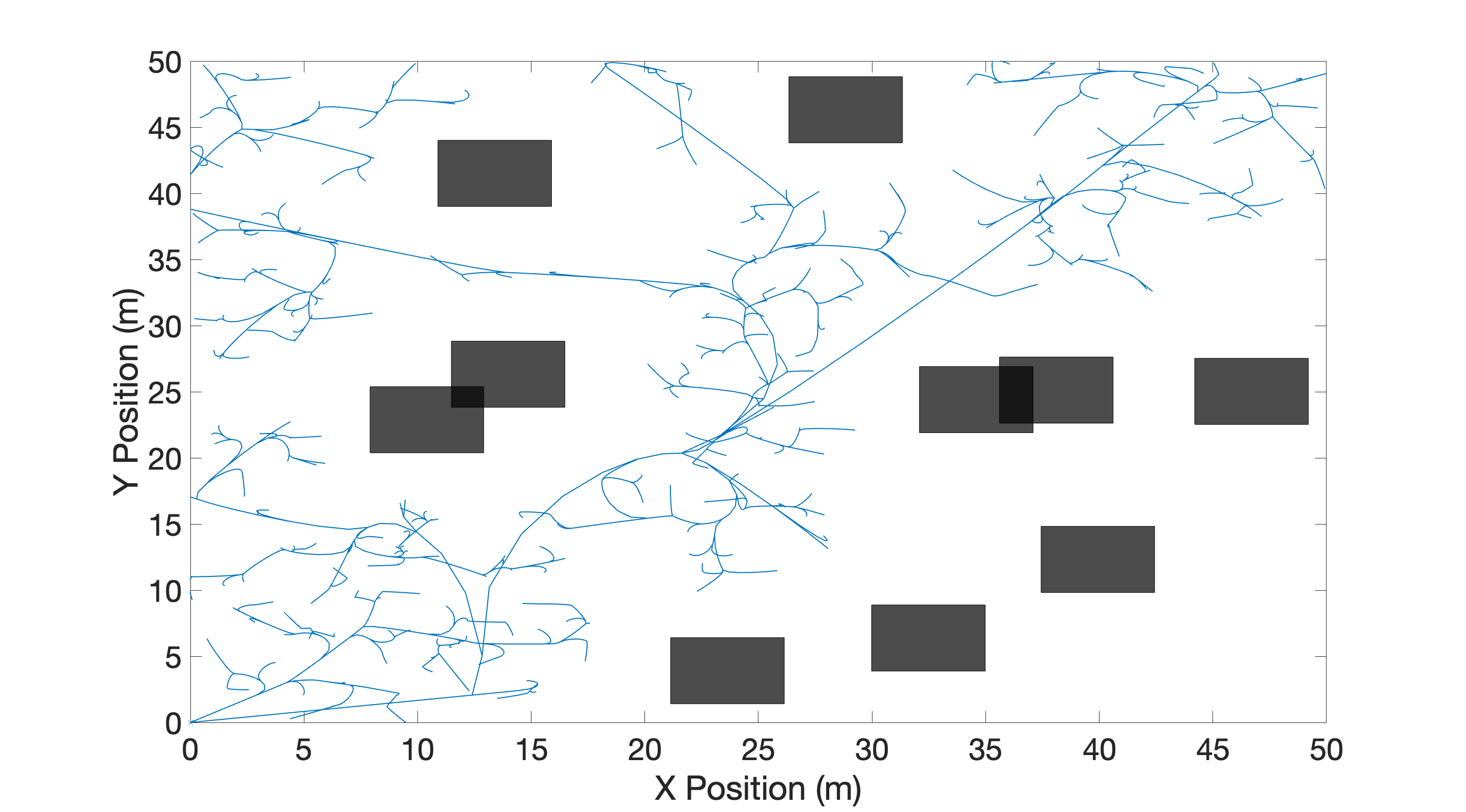}
  \caption{DR-RRT tree in $\mathbb{R}^{2}$ with ERA using $\Delta = 0.02$.}
  \label{fig:third}
\end{minipage}%
\end{figure*}
The environment for the simulation is a square area $[0,50]^{2}$~m$^2$ where $N=10$ rectangular obstacles are randomly placed. As for the simulations of DR-RRT in \cite{summers_risk}, a robot with discrete-time stochastic double-integrator dynamics having a mass of $1$~kg is considered. The initial position is $[0,0]$~m and the initial velocity is zero. The dynamics and input matrices are
\begin{align}
A =
\begin{bmatrix} 
I_{2} & dt I_{2} \\
0_{2 \times 2} & I_{2} 
\end{bmatrix}, &&
B = \begin{bmatrix} \frac{dt^2}{2} I_{2} \\ dt I_{2} \end{bmatrix},
\end{align}
with $dt=0.1$~s. The robot state is the position and velocity along each axis, with the corresponding force as inputs. The covariance matrices of the initial state and the disturbance are chosen as in \cite{summers_risk},
\begin{equation*} 
\footnotesize
\Sigma_{x_0} = 10^{-3}
\begin{bmatrix} 
1 & 0 & 0 & 0 \\
0 & 1 & 0 & 0 \\
0 & 0 & 0 & 0 \\
0 & 0 & 0 & 0
\end{bmatrix}, 
\Sigma_w = 10^{-3}
\begin{bmatrix} 
0 & 0 & 0 & 0 \\
0 & 0 & 0 & 0 \\
0 & 0 & 2 & 1 \\
0 & 0 & 1 & 2
\end{bmatrix}.
\end{equation*}
All obstacles are static and treated as deterministic, so that all uncertainty comes from the unknown state of the robot. The robot is treated as a point mass and the bounds on the environment are not treated probabilistically. As in \cite{summers_risk}, the steering from a near node to a sample $x_s$ is done by solving a discrete-time linear quadratic optimal control problem to compute the affine state feedback policy that minimises
\begin{gather}
    \sum_{k=0}^{T_{s}-1} \hat{e}_k^\mathsf{T} Q \hat{e}_k + u_k^\mathsf{T} R u_k + \hat{e}_{T_{s}}^\mathsf{T} Q \hat{e}_{T_{s}},
\end{gather}
where $\hat{e}_k = \hat{x}_k - x_s$ and $T_s = \mathrm{T_{steer}}$, $Q = 40 I_{4}$ and $R = 0.1$. The quadratic optimal cost-to-go function is also used as the distance metric in the selection of the nearest tree nodes. In all simulations, the trajectories of the mean state $\hat{x}_k$ is represented by lines and the uncertainty is represented by ellipses of one standard deviation, derived from the covariance $\Sigma_{x_k}$. Note that in all the simulations, the ellipses are in the range of $0.01$~m (visibly too small). The planning horizon is $T = 1000$ and the steering horizon is $\mathrm{T_{steer}}= 10$. The risk budget for the entire planning horizon $T$ is denoted as $\Delta$. Three DR-RRT trees with 1000 samplings are simulated, namely: 1) Using URA and risk budget $\Delta = 0.1$, 2) Using ERA and risk budget $\Delta = 0.1$, and 3) Using ERA and risk budget $\Delta = 0.02$. The risk allocation of ERA was done using results from Theorem \ref{thm_1}. Besides from the risk allocation and risk budget, everything in the trees and environment are exactly the same, including the random sampling points. This is to get a fair comparison of the different trees. With URA, the same risk is allocated for all obstacles and time steps, such that $\delta_{ik} = \frac{\Delta}{T \cdot N} = \frac{0.1}{1000 \cdot 10} = 10 ^{-5}$. With ERA, the risk budget for a steering horizon is $\mathrm{\Delta_{steer}}+ \mathrm{\delta_{res}}$, where $\mathrm{\delta_{res}}$ is the residual of the node from which steering is done and 
\begin{equation*}
    \footnotesize
    \mathrm{\Delta_{steer}} = \frac{\Delta \cdot \mathrm{T_{steer}}}{T} = \begin{cases} 10 ^{-3}, &\text{ when } \Delta = 0.1 \\ 2 \times 10^{-4}, &\text{ when } \Delta = 0.02. \end{cases} 
\end{equation*}

\begin{table}[h]
\centering
\caption{Average Results of 1000 Independent Simulations each with 1000 iterations}
\label{tab_sim_results}
\begin{tabular}{|c|c|c|}
\hline
Methodology & $\Delta$ & \# Nodes $(|\mathcal{T}|)$ \\
\hline \hline
URA & $0.10$ & 3101 \\
\hline
ERA & $0.10$ & 8175 \\
\hline
ERA & $0.02$ & 3348 \\
\hline
\end{tabular}
\end{table}

\noindent \textbf{Discussion:} Figs. \ref{fig:first} and \ref{fig:second} show that DR-RRT with ERA generates less conservative paths than DR-RRT with uniform risk allocation when the same risk budget $\Delta = 0.1$ is used. With the same $\Delta$, DR-RRT with ERA explores the state-space more efficiently than with URA (consequence of Theorem \ref{thm_2}), still by having the same risk guarantees. Fig. \ref{fig:third} illustrates how DR-RRT with ERA can be used with lower risk budget $\Delta = 0.02$ and still generate paths with comparably a similar degree of conservatism as DR-RRT with a URA using a higher risk budget of $\Delta = 0.1$. The comparison in Table \ref{tab_sim_results} is a good indication of the above observation. In general, the selection of best $\Delta$ is not straightforward as it depends upon $T, \Sigma_{x_{0}}, \Sigma_{w}$ and the steering law being used. Overall, ERA gives the same risk guarantees for sampling-based motion planning algorithms, while maintaining a reduced conservatism and with almost no additional computational complexity resulting from the risk-allocation procedure.
%%%%%%%%%%%%%%%%%%%%%%%%%%%%%%%%%%%%%%%%%%%%%%%%%%%%%%%%%%%%%%%%%%%%%%%%%%%%%

\section{Conclusion} \label{sec_conclusion}
An extension of the sampling-based probabilistically complete DR-RRT motion planning algorithm in \cite{summers_risk} with an optimal risk allocation was presented. We proved that our risk allocation based embedding technique realises guaranteed conservative, yet increasingly more risk feasible trajectories for efficient state-space exploration. That is, all DR-RRT paths feasible with the URA are feasible with the ERA but not vice-versa. Future research will aim to design a slightly more involved risk allocation based embedding into the DR-RRT\textsuperscript{$\star$} algorithm from \cite{venki_risk, renganathan2023risk} to guarantee both risk-bounded and asymptotically optimal trajectories. 

%%%%%%%%%%%%%%%%%%%%%%%%%%%%%%%%%%%%%%%%%%%%%%%%%%%%%%%%%%%%%%%%%%%%%%%%%%%%%

\addtolength{\textheight}{-12.2cm}

\bibliographystyle{IEEEtran}
\bibliography{references}

% Generated by IEEEtran.bst, version: 1.14 (2015/08/26)
\begin{thebibliography}{10}
\providecommand{\url}[1]{#1}
\csname url@samestyle\endcsname
\providecommand{\newblock}{\relax}
\providecommand{\bibinfo}[2]{#2}
\providecommand{\BIBentrySTDinterwordspacing}{\spaceskip=0pt\relax}
\providecommand{\BIBentryALTinterwordstretchfactor}{4}
\providecommand{\BIBentryALTinterwordspacing}{\spaceskip=\fontdimen2\font plus
\BIBentryALTinterwordstretchfactor\fontdimen3\font minus
  \fontdimen4\font\relax}
\providecommand{\BIBforeignlanguage}[2]{{%
\expandafter\ifx\csname l@#1\endcsname\relax
\typeout{** WARNING: IEEEtran.bst: No hyphenation pattern has been}%
\typeout{** loaded for the language `#1'. Using the pattern for}%
\typeout{** the default language instead.}%
\else
\language=\csname l@#1\endcsname
\fi
#2}}
\providecommand{\BIBdecl}{\relax}
\BIBdecl

\bibitem{luders_ccrrt}
B.~Luders, M.~Kothari, and J.~How, ``Chance constrained {RRT} for probabilistic
  robustness to environmental uncertainty,'' in \emph{AIAA Guidance,
  Navigation, and Control Conference}, 2010, pp. 8160--8181.

\bibitem{luders_ccrrtstar}
B.~D. Luders, S.~Karaman, and J.~P. How, ``Robust sampling-based motion
  planning with asymptotic optimality guarantees,'' in \emph{AIAA Guidance,
  Navigation, and Control (GNC) Conference}, 2013, pp. 5097--5122.

\bibitem{summers_risk}
T.~Summers, ``Distributionally robust sampling-based motion planning under
  uncertainty,'' in \emph{2018 IEEE/RSJ International Conference on Intelligent
  Robots and Systems (IROS)}.\hskip 1em plus 0.5em minus 0.4em\relax IEEE,
  2018, pp. 6518--6523.

\bibitem{venki_risk}
V.~Renganathan, I.~Shames, and T.~H. Summers, ``Towards integrated perception
  and motion planning with distributionally robust risk constraints,''
  \emph{IFAC-PapersOnLine}, vol.~53, no.~2, pp. 15\,530--15\,536, 2020.

\bibitem{han_risk}
W.~Han, A.~Jasour, and B.~Williams, ``Non-gaussian risk bounded trajectory
  optimization for stochastic nonlinear systems in uncertain environments,'' in
  \emph{2022 International Conference on Robotics and Automation (ICRA)}, 2022,
  pp. 11\,044--11\,050.

\bibitem{lathrop2021distributionally}
P.~Lathrop, B.~Boardman, and S.~Martinez, ``Distributionally safe path
  planning: Wasserstein safe {RRT},'' \emph{IEEE Robotics and Automation
  Letters}, vol.~7, no.~1, pp. 430--437, 2021.

\bibitem{luders2011probabilistic}
B.~Luders and J.~How, ``Probabilistic feasibility for nonlinear systems with
  non-gaussian uncertainty using {RRT},'' in \emph{Infotech@ Aerospace}.\hskip
  1em plus 0.5em minus 0.4em\relax AIAA, 2011, pp. 1589--1602.

\bibitem{mohajerin2018data}
P.~Mohajerin~Esfahani and D.~Kuhn, ``Data-driven distributionally robust
  optimization using the wasserstein metric: Performance guarantees and
  tractable reformulations,'' \emph{Mathematical Programming}, vol. 171, no.
  1-2, pp. 115--166, 2018.

\bibitem{hota2019data}
A.~R. Hota, A.~Cherukuri, and J.~Lygeros, ``Data-driven chance constrained
  optimization under {W}asserstein ambiguity sets,'' in \emph{2019 American
  Control Conference (ACC)}.\hskip 1em plus 0.5em minus 0.4em\relax IEEE, 2019,
  pp. 1501--1506.

\bibitem{Astghik_Cvar}
A.~Hakobyan and I.~Yang, ``Wasserstein distributionally robust motion planning
  and control with safety constraints using conditional value-at-risk,'' in
  \emph{2020 IEEE International Conference on Robotics and Automation (ICRA)},
  2020, pp. 490--496.

\bibitem{ono_iter_risk}
M.~Ono and B.~C. Williams, ``Iterative risk allocation: A new approach to
  robust model predictive control with a joint chance constraint,'' in
  \emph{IEEE Conference on Decision and Control}, 2008, pp. 3427--3432.

\bibitem{pilipovsky2021covariance}
J.~Pilipovsky and P.~Tsiotras, ``Covariance steering with optimal risk
  allocation,'' \emph{IEEE Transactions on Aerospace and Electronic Systems},
  vol.~57, no.~6, pp. 3719--3733, 2021.

\bibitem{renganathancovsteer}
V.~Renganathan, J.~Pilipovsky, and P.~Tsoitras, ``Distributionally robust
  covariance steering with optimal risk allocation,'' \emph{arXiv preprint
  arXiv:2210.00050}, 2022.

\bibitem{lavalle1998rapidly}
S.~M. LaValle and J.~J. Kuffner~Jr, ``Randomized kinodynamic planning,''
  \emph{The International Journal of Robotics Research}, vol.~20, no.~5, pp.
  378--400, 2001.

\bibitem{ono_planning}
M.~Ono and B.~C. Williams, ``An efficient motion planning algorithm for
  stochastic dynamic systems with constraints on probability of failure.'' in
  \emph{AIAA Guidance, Navigation, and Control Conference}, 2008, pp.
  1376--1382.

\bibitem{vitus_risk_feedback}
M.~P. Vitus and C.~J. Tomlin, ``On feedback design and risk allocation in
  chance constrained control,'' in \emph{Conference on Decision and Control and
  European Control Conference}.\hskip 1em plus 0.5em minus 0.4em\relax IEEE,
  2011, pp. 734--739.

\bibitem{kajsa_ms_thesis}
K.~Ekenberg, ``Distributionally robust risk-bounded path planning through exact
  spatio-temporal risk allocation,'' \emph{MS Thesis, TFRT-6174, Lund
  University, Sweden}, 2022.

\bibitem{hunter_book}
D.~Hunter, ``An upper bound for the probability of a union,'' \emph{Journal of
  Applied Probability}, vol.~13, no.~3, pp. 597--603, 1976.

\bibitem{kwerel1975bounds}
S.~M. Kwerel, ``Bounds on the probability of the union and intersection of m
  events,'' \emph{Advances in Applied Probability}, vol.~7, no.~2, pp.
  431--448, 1975.

\bibitem{patil2021upper}
A.~Patil and T.~Tanaka, ``Upper and lower bounds for end-to-end risks in
  stochastic robot navigation,'' \emph{arXiv preprint arXiv:2110.15879}, 2021.

\bibitem{calafiore2006distributionally}
G.~C. Calafiore and L.~E. Ghaoui, ``On distributionally robust
  chance-constrained linear programs,'' \emph{Journal of Optimization Theory
  and Applications}, vol. 130, no.~1, pp. 1--22, 2006.

\bibitem{renganathan2023risk}
V.~Renganathan, S.~Safaoui, A.~Kothari, B.~Gravell, I.~Shames, and T.~Summers,
  ``Risk bounded nonlinear robot motion planning with integrated perception \&
  control,'' \emph{Artificial Intelligence}, vol. 314, p. 103812, 2023.

\end{thebibliography}

%%%%%%%%%%%%%%%%%%%%%%%%%%%%%%%%%%%%%%%%%%%%%%%%%%%%%%%%%%%%%%%%%%%%%%%%%%%%%

\end{document}